\title{\LARGE \bf
Simultaneous Feature and Body-Part Learning for Real-Time \\
Robot Awareness of Human Behaviors
}
\author{Fei Han$^{1}$, Xue Yang$^{1}$, Christopher Reardon$^{2}$, Yu Zhang$^{3}$, and Hao Zhang$^{1}$
\thanks{$^1$Fei Han, Xue Yang, and Hao Zhang are with  the Department of Computer Science, Colorado School of Mines, Golden, CO 80401, USA. {\tt\small  fhan@mines.edu, edyxueyx@gmail.com, hzhang@mines.edu}.
}
\thanks{$^2$Christopher Reardon is with the US Army Research Laboratory, Adelphi, MD 20783, USA.\newline
{\tt\small  christopher.m.reardon3.civ@mail.mil}.}
\thanks{$^3$Yu Zhang is with the Department of Computer Science and Engineering, Arizona State University, Tempe, AZ 85281, USA.
{\tt\small  yzhan442@asu.edu}.}}
\begin{document}

\newtheorem{definition}{Definition}
\newtheorem{theorem}{Theorem}
\newtheorem{lemma}{Lemma}
\newtheorem{proposition}{Proposition}
\newtheorem{property}{Property}
\newtheorem{observation}{Observation}
\newtheorem{corollary}{Corollary}

\maketitle
\thispagestyle{empty}
\pagestyle{empty}

\begin{abstract}
Robot awareness of human actions is an essential research problem in robotics with many important real-world applications,
including human-robot collaboration and teaming.
Over the past few years, depth sensors have become a standard device widely used by intelligent robots for 3D perception,
which can also offer human skeletal data in 3D space.
Several methods based on skeletal data were designed to enable robot awareness of human actions
with satisfactory accuracy.
However,
previous methods treated all body parts and features equally important, without the capability to identify discriminative body parts and features.
In this paper,
we propose a novel simultaneous \textit{Feature And Body-part Learning (\textbf{FABL})} approach that simultaneously identifies discriminative body parts and features,
and efficiently integrates all available information together to enable \emph{real-time} robot awareness of human behaviors.
We formulate FABL as a regression-like optimization problem
with structured sparsity-inducing norms to model interrelationships of body parts and features.
We also develop an optimization algorithm to solve the formulated problem, which possesses a theoretical guarantee to find the optimal solution.
To evaluate FABL,
three experiments were performed
using public benchmark datasets, including the MSR Action3D and CAD-60 datasets,
as well as a Baxter robot in practical assistive living applications.
Experimental results show that our FABL approach obtains
a high recognition accuracy with a processing speed of the order-of-magnitude of $10^4$ Hz,
which makes FABL a promising method to enable real-time robot awareness of human behaviors in practical robotics applications.
\end{abstract}

\section{Introduction}
In a wide variety of human-centered robotics applications, including human-robot teaming, human-robot collaboration, and robot-assisted living,
robot awareness of human actions (or behaviors) is essential 
for intelligent robots to understand humans,
make situationally appropriate decisions, and interact with and assist people.
However, robot awareness of human behaviors in real-world environments
is a challenging problem caused by significant variations of human motion,
diversity of human appearance,
and vision difficulties, including illumination variations and occlusion.
When implemented on robots,
additional challenges are encountered,
such as uncertainty in movement 
and dynamic backgrounds; 
Most importantly, the requirement of real-time performance demands timely robot planning and decision making.

Although human action understanding has been researched
in robotics and computer vision communities,
most previous techniques are based on local spatio-temporal visual features \cite{zhang2014simplex,wang2014learning},
which are generally incapable of dealing with the challenges
introduced by robotics applications (e.g., real-time performance).
With the emergence of affordable
structured-light or time-of-flight depth sensing technologies,
color-depth cameras have generally become a standard 3D visual sensing device for modern indoor robots.
The skeletal data of humans acquired from such sensors, as shown in Fig. \ref{fig:sub:skeleton},
provides the possibility to achieve real-time robot awareness of human behaviors \cite{shotton2011real},
which also provides benefits in comparison to local features,
including the invariance to viewpoint, human body scale and motion speed \cite{han2016space,Zhang15}.

\begin{figure}[tb]
  \subfigure[Skeletal data]{
    \label{fig:sub:skeleton} 
    \begin{minipage}[b]{0.165\textwidth}
      \centering
        \includegraphics[height=1.2in]{./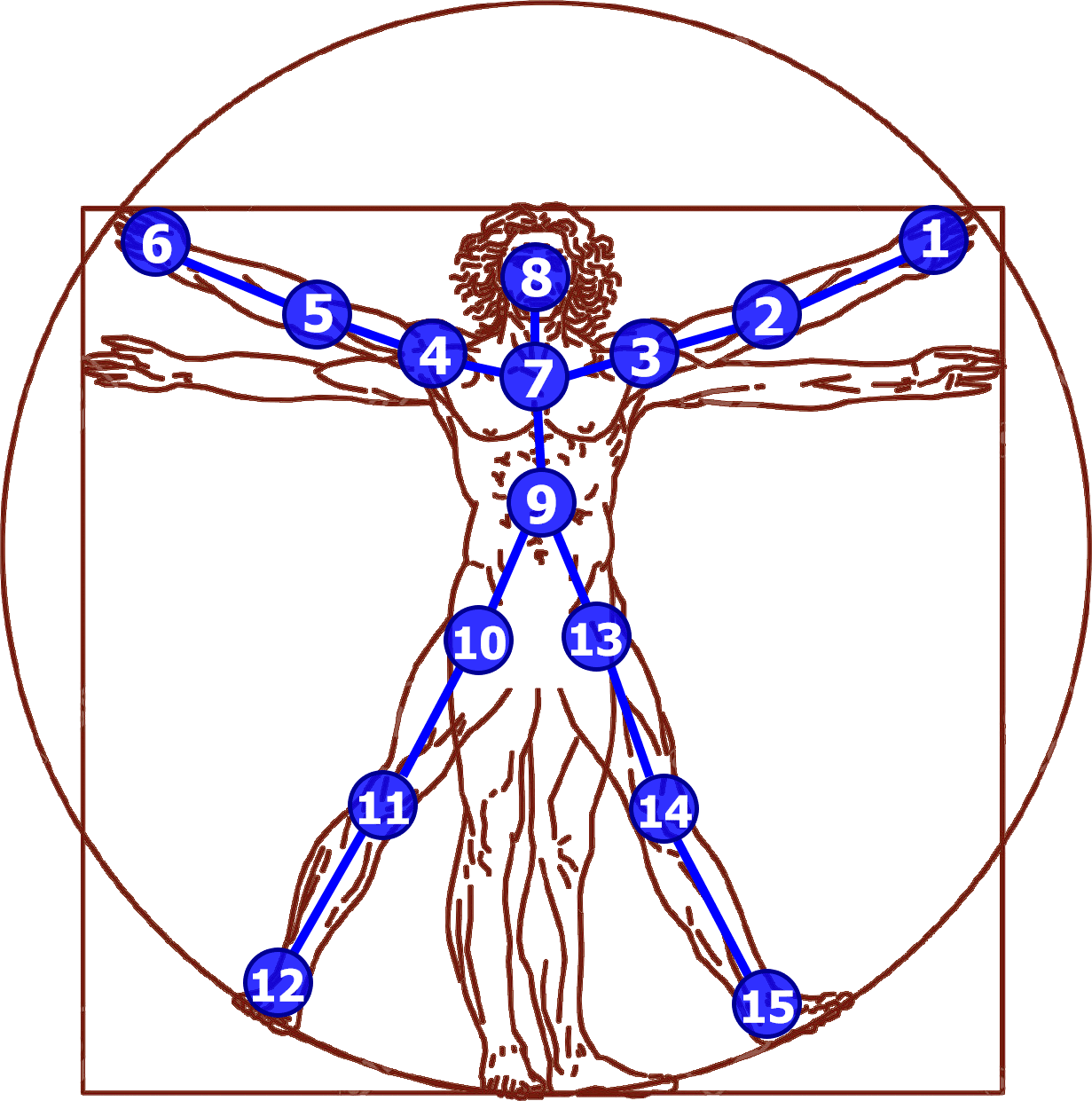}
    \end{minipage}}
   \subfigure[Discriminative features and body parts]{
    \label{fig:sub:FASJL} 
    \begin{minipage}[b]{0.29\textwidth}
      \centering
        \includegraphics[height=1.2in]{./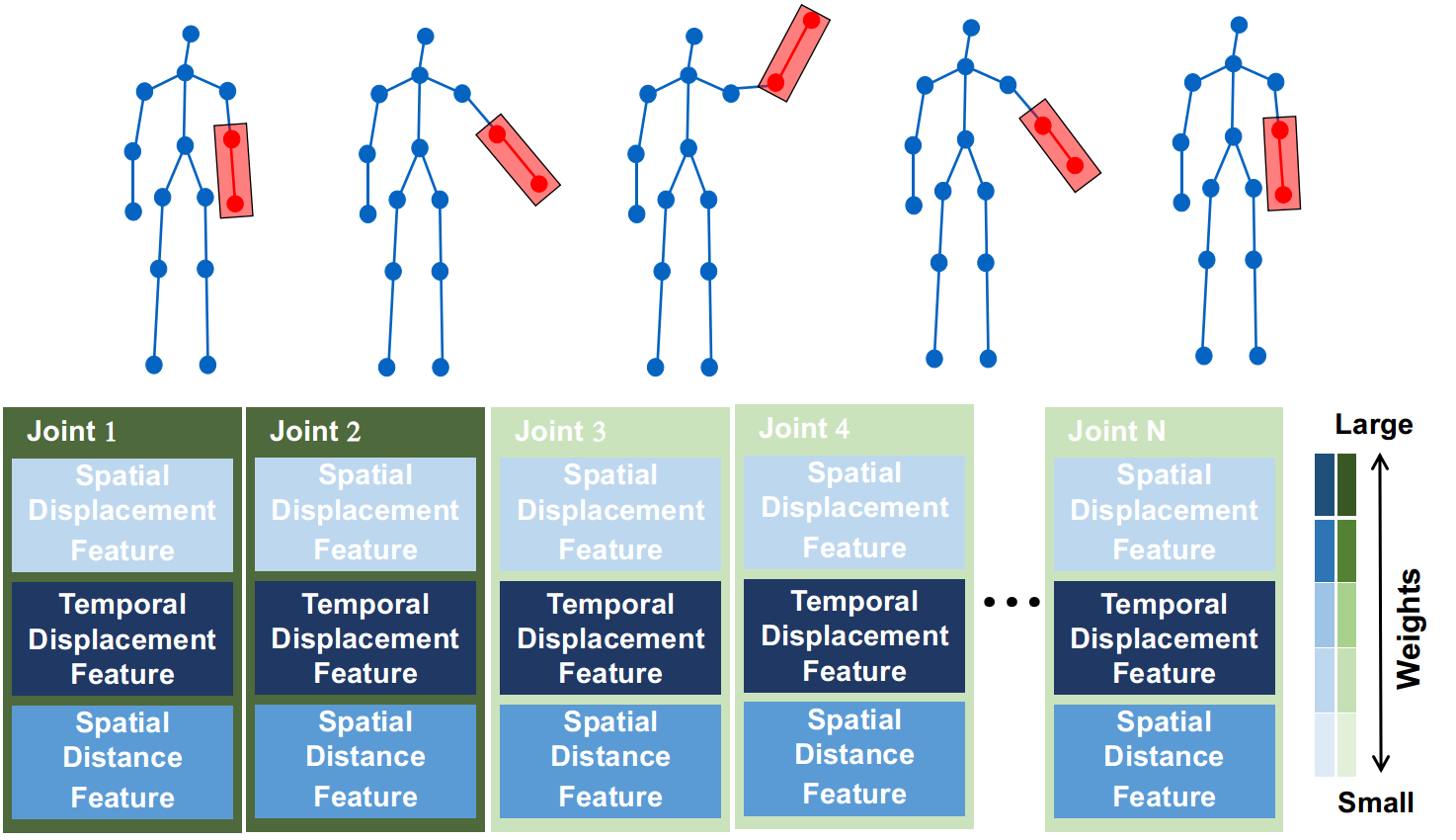}
    \end{minipage}}
  \caption{
A motivating example of the FABL approach,
which simultaneously learns discriminative skeleton joints
and multimodal heterogeneous features to enable real-time robot awareness of human behaviors.
}  \label{fig:treeStrucutre} 
\end{figure}

Because of these advantages,
skeleton-based action understanding methods have attracted increasing attention,
and many skeletal features and representations have been implemented during the last few years
, see \cite{han2016space} and references therein, i.e. joint rotation matrix \cite{Sung_ICRA12}, BIPOD \cite{Zhang15}, etc.
However, most existing methods apply only one type of skeletal feature \cite{Yang_CVPRW12,Sung_ICRA12},
while others simply concatenate several types of skeletal features
together into a single bigger vector to encode human actions \cite{gowayyed2013histogram,yang2014effective}.
The problem of autonomously learning the importance of skeletal features
and optimally integrating the multimodal features (different human activity representations extracted from skeletal data) together
has not yet been well addressed for real-time robot awareness of human behaviors.
Recently, methods based on \emph{body parts} (\emph{represented as joints in skeletal data})
instead of using complete skeleton data were studied to improve action recognition accuracy \cite{Zhang15,wang2012mining,ofli2014sequence}.
To remove irrelevant joints for specific behaviors, these methods use a subset of or select skeletal joints.
Although these methods obtained promising accuracy,
the selection is manual based upon fixed criteria and is not robust to various scenarios.
Furthermore, the question of how to integrate multimodal skeletal features into body-part methods has not been well answered.

In this paper, we introduce a novel \emph{Feature And Body-part Learning} (FABL) method to enable real-time robot awareness of human behaviors,
through learning discriminative skeletal features and body parts
simultaneously in the same optimization framework.
For learning the importance of body parts,
our approach is inspired by the insight that
typically a subset of body parts are more discriminative to recognize an action.
For example,
as demonstrated in Fig. \ref{fig:sub:FASJL},
only the waving arm and hand are important for the action of ``hand waving.''
Our FABL method is able to select discriminative body parts automatically for different behaviors.
Simultaneously,
 FABL learns the importance of heterogeneous skeletal features,
and integrates multimodal features to build a more discriminative representation to enable robot awareness of human behaviors.
Classification is seamlessly integrated in the FABL approach
 (i.e., no external classifier is required),
which further increases processing efficiency,
resulting in high-speed performance that is suitable for applications with real-time requirements.

The contributions of this paper are twofold:
\begin{itemize}
\item We propose a novel formulation and the FABL approach to perform simultaneous learning of discriminative body parts and skeletal features for real-time robot awareness of human behaviors.
\item We develop a new optimization algorithm to efficiently solve the formulated robot learning problem, which has a theoretical guarantee to converge to the global optimal solution.
\end{itemize}
We make the code that implements our FABL approach available at:
\url{http://hcr.mines.edu/code/FABL.html}.

The remainder of this paper is structured as follows.
Related work is described in Section \ref{sec:RelatedWork}.
Then, our FABL approach is detailed in Sections \ref{sec:Learning} and \ref{sec:Optimization}.
Experimental results are presented in Section \ref{sec:Experiments}.
After discussing several attributes of the proposed FABL method in Section \ref{sec:discussion},
we conclude this paper in Section \ref{sec:Conclusion}.

\section{Related Work}\label{sec:RelatedWork}
In this section, we conduct a review of techniques to understand human actions using skeletal data, including both complete skeletal data and partial body parts.

\subsection{Behavior Understanding Based on Skeletal Data}

Methods using 3D skeletal data to identify human actions
attracted increasing attention after the release of the affordable structured-light 3D sensing technology \cite{han2016space}.
A widely applied representation for human action understanding is based on skeletal joint displacements.
Chen and Koskela \cite{chen2013online} implemented a feature extraction
method based on pairwise relative position of skeletal joints
with normalization, and actions were classified by multiple extreme learning machines.
Wei \emph{et al.} \cite{Wei2013} implemented a hierarchical graph to represent spatio-temporal joint positions and displacements,
where the differences in skeletal joint positions between two successive frames were defined as features.
Besides joint displacements, many methods based on joint orientations were also implemented.
Sung \emph{et al.} \cite{Sung_ICRA12} computed the orientation matrix of each joint
with respect to the camera,
then transformed the matrix to obtain this joint
orientation with respect to the human torso,
showing their representation was invariant to the sensor's location.
Another popular category of skeleton-based methods directly
use raw joint position information for human action understanding.
Wei \emph{et al.} \cite{wei2013concurrent} developed wavelet features to represent a sequence of 3D skeletal joints,
and a concurrent action detection model to understand human behaviors.

Most of the previous skeleton-based methods utilized only one category of skeleton-based features.
Several recent studies indicate that recognition accuracy can be improved by combining multiple skeletal features together.
A feature construction approach was introduced in \cite{Yang_CVPRW12}
that concatenates static posture, movement, and offset values into a single bigger feature vector,
and utilizes a naive Bayes classifier to perform multi-class action classification.
Yu \emph{et al.} \cite{yu2015discriminative} used three categories of skeletal features,
including pairwise joint distance, spatial joint coordinate, and temporal variation of joint locations,
to construct a mixed representation.
A similar skeleton-based representation was implemented by \cite{Masood2011}, incorporating pairwise joint distances and temporal joint location changes together.
However, most previous techniques
 simply concatenated different categories of features
without considering the importance of each skeletal feature category.
The research problem of how to autonomously learn and fuse heterogeneous skeletal features for real-time robot awareness of human actions has not yet been well studied.

The proposed FABL approach addresses this problem by integrating heterogeneous multimodal skeletal features
through learning the importance of each feature category,
along with learning discriminative body parts,
to accurately interpret human actions.

\subsection{Representation Based on Body Parts}

Skeletal human representations based on
body part models have been widely studied in the past few years.
Because these mid-level body part models can partially take into account
the physical structure of human body,
they can yield improved discrimination power to represent humans \cite{Zhang15}.

Wang \emph{et al.} \cite{wang2013approach} implemented a method that
decomposed a body model into five parts,
including left/right arms/legs and the torso, each consisting of a set of joints,
to represent human behaviors in space and time dimensions.
A spatial-temporal And-Or graph model was implemented in \cite{xiaohan2015joint}
to represent humans at three levels including poses, spatiotemporal-parts, and parts.
The hierarchical human body structure captures the geometric and appearance variation of humans at each frame.
A deep neural network was introduced in \cite{du2015hierarchical} to create a body part model
and the correlation of body parts was investigated,
which can automatically obtain mid-level features
that were more descriptive than low-level features extracted from individual human skeleton joints.
Several methods were also proposed to select more descriptive human body joints
\cite{wang2014learning,wang2012mining,ofli2014sequence,chaaraoui2014evolutionary,reyes2011featureweighting,patsadu2012human,
huang2014action}.

Bio-inspired body part models are also commonly applied
to extract mid-level features for skeleton-based representation construction,
which are typically based on body kinematics or human anatomy.
Chaudhry \emph{et al.} \cite{Chaudhry2013} implemented  bio-inspired mid-level features
to represent human activities based on 3D skeleton data,
by leveraging the findings in the research area of static shape encoding in the primate cortex's neural pathway.
By showing different 3D shapes to primates
and measuring their neural responses,
the primates'
internal shape representation was estimated,
which was then used to extract body parts to create skeleton-based representations.
Zhang and Parker \cite{Zhang15}
proposed a new bio-inspired predictive orientation decomposition representation,
which was inspired by the biological research in human anatomy.
This approach decomposed a body model into five body parts,
and projected 3D human skeleton trajectories onto three anatomical planes.
Through estimating future skeleton trajectories,
this method is able to predict future human motions.

Despite the promising results obtained by the methods based on body parts,
which mutually partition the body model into several body parts
or select a set of skeletal joints according to predefined criteria,
previous techniques did not model the discrimination difference of human joints but simply include or exclude certain joints.
In this paper,
we introduce a new approach to automatically learn discriminative skeletal joints
without predefined manual selection criteria.

\section{The FABL Approach} \label{sec:Learning}

In this section, we describe our FABL method
that simultaneously learns discriminative skeletal features and body parts
to enable real-time robot awareness of human behaviors.

\textit{Notation.}
In this paper, we denote matrices using boldface capital letters,
and vectors using boldface lowercase letters.
We represent the $\ell_1$-norm of a vector $\mathbf{v} \in \Re^n$ using $\|\mathbf{v}\|_1 = \sum_{i=1}^{n} |v_i|$,
and the $\ell_2$-norm of $\mathbf{v}$ as $\|\mathbf{v}\|_2 = \sqrt{\mathbf{v}^\top\mathbf{v}}$.
Given a matrix $\mathbf{M} \!=\! \{m_{ij}\}\in \Re^{m\times n}$,
we refer to its $i$-th row as $\mathbf{m}^{i}$ and the $j$-th column as $\mathbf{m}_{j}$.
We denote the Frobenius norm of the matrix $\mathbf{M}$ as $\|\mathbf{M}\|_F=\sqrt{\sum_{i=1}^m\sum_{j=1}^n{m_{ij}^2}}$.

\subsection{Problem Formulation}

Given a collection of $n$ data instances,
the skeletal matrix is denoted as $\mathbf{X}=\left[\mathbf{x}_1,\cdots,\mathbf{x}_n\right]\in\Re^{d\times n}$,
where $\mathbf{x}_i\in\Re^{d}$ is the vector of all skeletal features for the $i$-th data instance.
When heterogeneous skeletal features are used,
each vector $\mathbf{x}_i\in\Re^{d}$ consists of $m$ modalities
such that $d=\sum_{j=1}^{m}{d_j}$.
Within each modality, the skeletal features  are further divided into
$s$ partitions, and each partition contains features from a skeleton joint.
Then, we formulate robot awareness of human behaviors as a problem of
dividing $\left\{\mathbf{x}_i\right\}_{i=1}^n$ into $c$ behavior categories through exploiting all available information
from heterogeneous feature modalities and skeleton joints,
using a regression-like classification objective as follows:
\begin{eqnarray}\label{eq:objective_F_norm}
\mathop{\text{min}}\limits_{\mathbf{W}}{\|\mathbf{X}^\top\mathbf{W}+\mathbf{1}_n\mathbf{b}^\top-\mathbf{Y}\|_F^2},
\end{eqnarray}
where $\mathbf{1}_n \in \Re^{n\times 1}$ is the constant vector of all $1$'s, $\mathbf{b}\in\Re^{c\times 1}$ is the intercept vector,
 $\mathbf{Y}=\left[\mathbf{y}_1,\cdots,\mathbf{y}_n\right]^\top\in\Re^{n\times c}$ denotes the behavior category indicator matrix, and
$\mathbf{y}_i\in\Re^c$ denotes the category indicator vector for the feature vector $\mathbf{x}_i$
with $y_{ij}$ indicating how likely $\mathbf{x}_i$ belongs to the $j$-th category.
The label matrix $\mathbf{Y}$ of the data instances is given in the training phase.
Then, the value of $\mathbf{b}$ in Eq. (\ref{eq:objective_F_norm}) can be calculated by $\mathbf{b}=\mathbf{Y}^\top\mathbf{1}_n/n$.

The solution of the optimization problem in Eq. (\ref{eq:objective_F_norm}) is the
parameter matrix $\mathbf{W}  = [\mathbf{w}_1, \mathbf{w}_2, \dots, \mathbf{w}_c] \in\Re^{d\times c}$,
which contains the weights $\mathbf{w}_i \in\Re^{d}$ of each feature modality and skeletal joint with respect to the $i$-th behavior category.
The parameter matrix $\mathbf{W}$ is denoted as:
\begin{eqnarray}
\mathbf{W} = \left[
  \begin{array}{ccc}
    \mathbf{w}_1^1 & \cdots & \mathbf{w}_c^1\\
    \vdots & \ddots & \vdots \\
    \mathbf{w}_1^m & \cdots & \mathbf{w}_c^m\\
  \end{array}
  \right],
\end{eqnarray}
where $\mathbf{w}_p^q\in\Re^{d_q}$ indicates the weights of the $q$-th modality including all skeleton joints with respect to the $p$-th behavior category, which is denoted as
$\mathbf{w}_p^q = \left[\mathbf{w}_p^{q^1};\mathbf{w}_p^{q^2};\cdots;\mathbf{w}_p^{q^s}\right]$,
and $\mathbf{w}_p^{q^r}\in\Re^{d_{q^r}}$ represents the weights of the $r$-th skeleton joint within the $q$-th modality with respect to the $p$-th human behavior category,
where $d_{q^r}$ is the dimension of features that are obtained from the $r$-th skeleton joint in the $q$-th modality,
satisfying $\sum_{r=1}^{s}{d_{q^r}}=d_q$, and
$s$ is the number of skeleton joints in each modality.
An illustration of the weight matrix is presented in Fig. \ref{eq:W}.

\begin{figure}[ht]
\centering
\includegraphics[width=3.5in]{./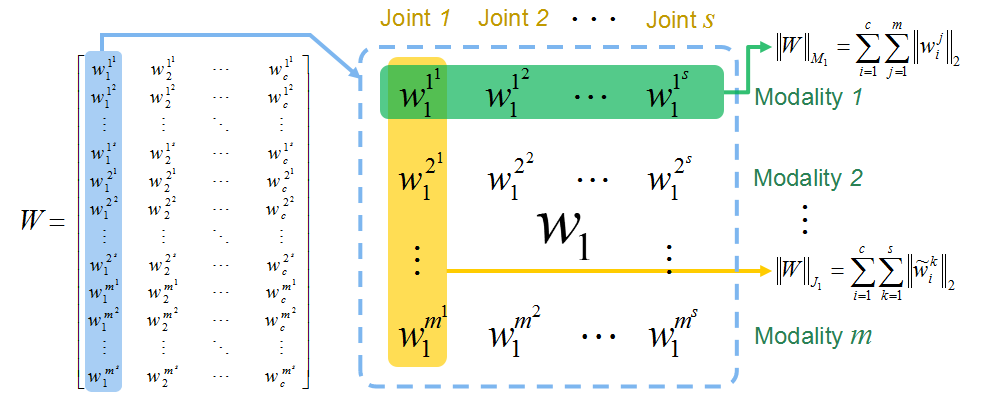}
\caption{
Illustration of the structured sparsity-inducing norms introduced in our FABL method.
Given the
parameter matrix $\mathbf{W}  = [\mathbf{w}_1, \mathbf{w}_2, \dots, \mathbf{w}_c] \in\Re^{d\times c}$,
we arrange each column vector $\mathbf{w}_i$ of the $i$-th action category into a matrix,
where rows represent modalities and columns denote skeletal joints.
We model the interrelationships of the feature modalities using the $M_1$-norm regularization term,
and the interrelationships of the skeletal joints using the $J_1$-norm regularization to model the representative joints.
}\label{eq:W}
\end{figure}

\subsection{Learning of Discriminative Body Parts}\label{sec:sub:PartLearning}

For specific behaviors,
a small set of body parts (represented as joints in human skeletal data)
are more discriminative than others.
For example,
in the behavior of hand waving as depicted in Fig. \ref{fig:sub:FASJL},
the forehand and hand joints are more discriminative.
Such discriminative human skeletal joints are typically not shared by all behavior categories
(i.e. the joints to recognize waving and kicking are substantially different).
To learn discriminative body parts,
we introduce a new joint-based group $\ell_1$-norm (named $J_1$-norm)
as a regularizer of the problem in Eq. (\ref{eq:objective_F_norm}).
The $J_1$-norm is mathematically defined as $\|\mathbf{W}\|_{J_1} \!=\! \sum_{i=1}^{c}{\sum_{k=1}^{s}{\|\mathbf{\tilde{w}}_i^k\|_2}}$,
where
$\mathbf{\tilde{w}}_i^k\in\Re^{d_k}$ denotes the weights
of the $k$-th human skeletal joint with respect to the $i$-th behavior category for all feature modalities,
which is expressed as $\mathbf{\tilde{w}}_i^k=\left[\mathbf{w}_i^{1^k};\mathbf{w}_i^{2^k};\cdots;\mathbf{w}_i^{m^k}\right]$,
and $\sum_{k=1}^{s}{d_k}=d$.
Then, we can rewrite the objective function as:
\begin{equation}\label{eq:objective_J1_norm}
\mathop{\text{min}}\limits_{\mathbf{W}}{\|\mathbf{X}^\top\mathbf{W}+\mathbf{1}_n\mathbf{b}^\top-\mathbf{Y}\|_F^2} + \gamma \|\mathbf{W}\|_{J_1}.
\end{equation}
where $\gamma$ is a trade-off hyperparameter.

The $J_1$-norm applies the $\ell_2$-norm within each skeletal joint
and the $\ell_1$-norm between the joints,
which enforces sparsity among different joints.
For example, if the skeletal features obtained from a human skeleton joint are not discriminative for a specific behavior category,
the objective in Eq. (\ref{eq:objective_J1_norm})
will assign zeros (in the ideal case, usually very small values) to them for this behavior category;
otherwise, their weights have large values.
As shown in Fig. \ref{eq:W},
the $J_1$-norm regularization term captures the interrelationship among body parts,
and estimates the importance of each body part to identify certain human behaviors.


\subsection{Learning of Multimodal Skeletal Features}

When heterogeneous multimodal features are available,
it is well accepted that different types of skeletal features show varying performance on recognizing different behaviors \cite{han2016space}.
That is,
the features from a specific modality can be more or less
discriminative for recognizing specific human behaviors.
For example, comparing to pose features,
motion features are generally less helpful to identify a still human behavior such as sitting.
To integrate multiple feature modalities
and model their interrelationships,
we introduce another group $\ell_1$-norm ($M_1$-norm)
as a new regularizer in Eq. (\ref{eq:objective_J1_norm}),
which is defined as $\|\mathbf{W}\|_{M_1} \!=\! \sum_{i=1}^{c}{\sum_{j=1}^{m}{\|\mathbf{w}_i^j\|_2}}$.
Then, incorporating
both multi-feature and multi-joint group sparsity-inducing norms,
the final objective function becomes:
\begin{equation}\label{eq:objective_final}
\mathop{\text{min}}\limits_{\mathbf{W}}{\|\mathbf{X}^\top\mathbf{W}+\mathbf{1}_n\mathbf{b}^\top-\mathbf{Y}\|_F^2} + \gamma_1 \|\mathbf{W}\|_{M_1} + \gamma_2 \|\mathbf{W}\|_{J_1}.
\end{equation}
where $\gamma_1$ and $\gamma_2$ are trade-off hyperparameters.

The $M_1$-norm uses the $\ell_2$-norm within each feature modality and the $\ell_1$-norm
between these modalities, which enforces the sparsity of these modalities.
For example, if a modality is not discriminative enough to recognize a certain behavior category,
the objective in Eq. (\ref{eq:objective_final})
will assign zeros (in the ideal case, usually very small values) to the features within this modality with respect to the behavior category;
otherwise, their weights are large.
As demonstrated in Fig \ref{eq:W}.,
the proposed $M_1$-norm regularization term captures the interrelationship between feature modalities and estimates their importance to recognize certain behaviors.

\subsection{Human Behavior Understanding}

After solving the optimization problem in  Eq. (\ref{eq:objective_final}) during the training phase
(solution is detailed in Section \ref{sec:Optimization}),
we can obtain the optimal weight matrix $\mathbf{W}^* = [\mathbf{w}_1^*, \mathbf{w}_2^*, \dots, \mathbf{w}_c^*] \in\Re^{d\times c}$.
Then, in the testing phase,
given a new multisensory instance $\mathbf{x}\in\Re^{d}$,
its behavior category $y(\mathbf{x})$ is decided by:
\begin{eqnarray}\label{eq:testingEq}
y(\mathbf{x}) = \mathop{\text{argmax}}\limits_{i}{\mathbf{x}^\top\mathbf{w}_i^*+{b}_i},\; i=1,2,...,c.
\end{eqnarray}

An advantage of our formulation utilizing the regression-like objective function
is that classification is integrated with feature learning;
thus, we do not require additional classifiers (e.g., SVMs).
This significantly improves processing efficiency,
resulting in high-speed recognition of human behaviors that can benefit real-time human-centered robotics applications.

\section{Optimization Algorithm}\label{sec:Optimization}
Since the objective in Eq. (\ref{eq:objective_final}) comprises two non-smooth
regularization terms: the $M_1$-norm and $J_1$-norm, it is difficult to solve in general.
To this end, we implement a new iterative algorithm to solve the optimization problem in Eq. (\ref{eq:objective_final})
with non-smooth regularization terms.
The proposed optimization solver has a theoretical guarantee to find the optimal solution.

To learn the value of the weight matrix $\mathbf{W}$,
we compute the derivative of the objective with respect to $\mathbf{w}_i$ $(1\leq i\leq c)$ and set
it to zero vector. Then, we obtain
\begin{eqnarray}
\mathbf{X}\mathbf{X}^\top\mathbf{w}_i - \mathbf{X}(\mathbf{y}_i-\mathbf{b}_i) + \gamma_1\mathbf{D}^i\mathbf{w}_i + \gamma_2\mathbf{\tilde{D}}^i\mathbf{w}_i = \textbf{0},
\end{eqnarray}
where $\mathbf{D}^i(1\leq i\leq c)$ is a block diagonal matrix with the $j$-th diagonal block
as $\frac{1}{2\|\mathbf{w}_i^j\|_2}\mathbf{I}_j$,
$\mathbf{w}_i^j$ is the $j$-th segment of $\mathbf{w}_i$ consisting of the weights of the $j$-th feature,
$\mathbf{\tilde{D}}^i$ is a diagonal matrix with the $k$-th
diagonal block as $\frac{1}{2\|\mathbf{\tilde{w}}_i^k\|_2}\mathbf{I}_k$, 
$\mathbf{\tilde{w}}_i^k$ is the $k$-th segment
of $\mathbf{w}_i$ including the weights of skeletal features calculated from the $k$-th skeleton joint,
and $\mathbf{I}_j$ is the identity matrix of size $d_j$.
Thus we have
\begin{eqnarray}
\mathbf{w}_i = (\mathbf{X}\mathbf{X}^\top + \gamma_1\mathbf{D}^i + \gamma_2\mathbf{\tilde{D}}^i)^{-1}\mathbf{X}(\mathbf{y}_i-\mathbf{b}_i).
\end{eqnarray}
Both $\mathbf{D}^i$ and $\mathbf{\tilde{D}}^i$ are dependent on $\mathbf{W}$ and
thus also unknown variables. An iterative algorithm is implemented to solve this problem, which
is described in Algorithm \ref{alg:1}.

Before analyzing convergence of Algorithm \ref{alg:1}, we describe a lemma from \cite{nie2010efficient} as follows.
\begin{lemma}\label{lemma1}
Given vectors $\mathbf{a}$ and $\mathbf{b}$, the following equation holds
\begin{equation}
\|\mathbf{a}\|_2-\dfrac{\|\mathbf{a}\|_2^2}{2\|\mathbf{b}\|_2}\leq\|\mathbf{b}\|_2-\dfrac{\|\mathbf{b}\|_2^2}{2\|\mathbf{b}\|_2}
\end{equation}
\end{lemma}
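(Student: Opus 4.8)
The plan is to reduce the vector inequality to a scalar one, since both sides depend on $\mathbf{a}$ and $\mathbf{b}$ only through the magnitudes $\|\mathbf{a}\|_2$ and $\|\mathbf{b}\|_2$. First I would introduce the shorthand $x = \|\mathbf{a}\|_2 \ge 0$ and $y = \|\mathbf{b}\|_2 > 0$ (the case $y = 0$ is excluded because the denominators $2\|\mathbf{b}\|_2$ would be undefined). The claimed inequality then reads $x - \frac{x^2}{2y} \le y - \frac{y^2}{2y}$, and I would immediately simplify the right-hand side, since $y - \frac{y^2}{2y} = y - \frac{y}{2} = \frac{y}{2}$.

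Next I would clear the denominator. Because $y > 0$, multiplying the reduced inequality $x - \frac{x^2}{2y} \le \frac{y}{2}$ through by $2y$ preserves the direction and yields the polynomial inequality $2xy - x^2 \le y^2$. Rearranging all terms to one side turns this into $0 \le y^2 - 2xy + x^2$, which I would recognize as the perfect square $0 \le (y - x)^2$. This last inequality holds for all real $x$ and $y$, so tracing the equivalences backward establishes the lemma. I would note that equality is attained precisely when $x = y$, i.e. when $\|\mathbf{a}\|_2 = \|\mathbf{b}\|_2$.

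There is no genuine obstacle here: the entire argument collapses to the nonnegativity of a square, and every step above is a reversible algebraic manipulation valid under $y > 0$. The only point requiring a word of care is the degenerate case $\|\mathbf{b}\|_2 = 0$, which I would simply set aside as outside the scope of the statement, since the quantities $\frac{\|\mathbf{a}\|_2^2}{2\|\mathbf{b}\|_2}$ and $\frac{\|\mathbf{b}\|_2^2}{2\|\mathbf{b}\|_2}$ are then ill-defined. The practical payoff of the lemma, which motivates its use in the convergence analysis of Algorithm~\ref{alg:1}, is that it bounds the surrogate term $\|\mathbf{a}\|_2 - \frac{\|\mathbf{a}\|_2^2}{2\|\mathbf{b}\|_2}$ by $\frac{\|\mathbf{b}\|_2}{2}$; summing such inequalities over the group blocks is exactly what will let me show the reweighted iterates monotonically decrease the objective in Eq.~(\ref{eq:objective_final}).
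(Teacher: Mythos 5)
Your proof is correct: reducing to scalars $x=\|\mathbf{a}\|_2$, $y=\|\mathbf{b}\|_2>0$ and observing that the inequality is equivalent to $(x-y)^2\ge 0$ is exactly the standard argument. The paper itself states this lemma without proof, importing it from the cited reference \cite{nie2010efficient}, and the proof given there is the same perfect-square manipulation you describe, so there is nothing to reconcile; your remark that the case $\|\mathbf{b}\|_2=0$ must be excluded is a fair (and commonly glossed-over) caveat.
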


\begin{theorem}\label{thm1}
Algorithm \ref{alg:1} converges to the optimal solution to the optimization problem in Eq. (\ref{eq:objective_final}).
\end{theorem}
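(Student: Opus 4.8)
The plan is to recognize Algorithm~\ref{alg:1} as a majorization--minimization (iteratively reweighted least squares) scheme and to prove two things: that the objective of Eq.~(\ref{eq:objective_final}) is nonincreasing along the iterates, and that the limit satisfies the first-order optimality condition of a convex program, hence is globally optimal. Throughout I write $J(\mathbf{W})$ for the objective in Eq.~(\ref{eq:objective_final}) and $\mathbf{W}^{(t)}$ for the iterate at step $t$, with $\mathbf{D}^i,\mathbf{\tilde{D}}^i$ the reweighting matrices built from $\mathbf{W}^{(t)}$.

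First I would fix $\mathbf{D}^i,\mathbf{\tilde{D}}^i$ at their values computed from $\mathbf{W}^{(t)}$ and consider the quadratic surrogate
\begin{equation}
g(\mathbf{W}) = \|\mathbf{X}^\top\mathbf{W}+\mathbf{1}_n\mathbf{b}^\top-\mathbf{Y}\|_F^2 + \gamma_1\sum_{i=1}^c\mathbf{w}_i^\top\mathbf{D}^i\mathbf{w}_i + \gamma_2\sum_{i=1}^c\mathbf{w}_i^\top\mathbf{\tilde{D}}^i\mathbf{w}_i .
\end{equation}
Since $\mathbf{D}^i\succeq 0$ and $\mathbf{\tilde{D}}^i\succeq 0$, the function $g$ is convex, and the closed-form update $\mathbf{w}_i^{(t+1)}=(\mathbf{X}\mathbf{X}^\top+\gamma_1\mathbf{D}^i+\gamma_2\mathbf{\tilde{D}}^i)^{-1}\mathbf{X}(\mathbf{y}_i-\mathbf{b}_i)$ is exactly the point where $\nabla g$ vanishes; therefore $\mathbf{W}^{(t+1)}$ is the global minimizer of $g$, giving $g(\mathbf{W}^{(t+1)})\leq g(\mathbf{W}^{(t)})$. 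Recalling that the $j$-th block of $\mathbf{D}^i$ is $\frac{1}{2\|(\mathbf{w}_i^j)^{(t)}\|_2}\mathbf{I}_j$ (and analogously for $\mathbf{\tilde{D}}^i$ over joints), evaluating the regularization terms of $g$ at $\mathbf{W}^{(t)}$ collapses each weighted quadratic to half its group norm, so that $g(\mathbf{W}^{(t)})$ equals the Frobenius loss at $\mathbf{W}^{(t)}$ plus $\tfrac{\gamma_1}{2}\|\mathbf{W}^{(t)}\|_{M_1}+\tfrac{\gamma_2}{2}\|\mathbf{W}^{(t)}\|_{J_1}$.

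Next I would convert the weighted quadratic terms in $g(\mathbf{W}^{(t+1)})$ back into true group norms by applying Lemma~\ref{lemma1} blockwise. Taking $\mathbf{a}=(\mathbf{w}_i^j)^{(t+1)}$, $\mathbf{b}=(\mathbf{w}_i^j)^{(t)}$ and summing over all modality blocks $(i,j)$, and using that the right-hand side of Lemma~\ref{lemma1} equals $\tfrac{1}{2}\|\mathbf{b}\|_2$, yields
\begin{equation}
\gamma_1\|\mathbf{W}^{(t+1)}\|_{M_1} \leq \gamma_1\sum_{i=1}^c\sum_{j=1}^m\frac{\|(\mathbf{w}_i^j)^{(t+1)}\|_2^2}{2\|(\mathbf{w}_i^j)^{(t)}\|_2} + \frac{\gamma_1}{2}\|\mathbf{W}^{(t)}\|_{M_1},
\end{equation}
and the analogous bound with $\gamma_2$, the joint blocks $(\mathbf{\tilde{w}}_i^k)$, and the $J_1$-norm. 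Adding these two bounds to the surrogate inequality $g(\mathbf{W}^{(t+1)})\leq g(\mathbf{W}^{(t)})$ cancels the weighted quadratic sums appearing on both sides, and what remains is precisely $J(\mathbf{W}^{(t+1)})\leq J(\mathbf{W}^{(t)})$; thus the objective is monotonically nonincreasing.

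Finally, since $J(\mathbf{W})\geq 0$ is bounded below and the sequence $\{J(\mathbf{W}^{(t)})\}$ is nonincreasing, it converges. To upgrade monotone decrease to convergence to the \emph{optimal} solution, I would invoke convexity: $J$ is a convex quadratic loss plus the two convex group norms $\|\cdot\|_{M_1}$ and $\|\cdot\|_{J_1}$, so every stationary point is a global minimizer. At a fixed point the update reproduces the stationarity condition $\mathbf{X}\mathbf{X}^\top\mathbf{w}_i-\mathbf{X}(\mathbf{y}_i-\mathbf{b}_i)+\gamma_1\mathbf{D}^i\mathbf{w}_i+\gamma_2\mathbf{\tilde{D}}^i\mathbf{w}_i=\mathbf{0}$, which coincides with the (sub)gradient optimality condition of Eq.~(\ref{eq:objective_final}); hence the limit is globally optimal. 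The main obstacle I anticipate is the non-differentiability of the group norms at the origin: if some block norm $\|(\mathbf{w}_i^j)^{(t)}\|_2$ vanishes, the reweighting matrix is undefined. Handling this rigorously requires either a smoothing argument, replacing $\|\cdot\|_2$ by $\sqrt{\|\cdot\|_2^2+\epsilon}$ and letting $\epsilon\to 0$, or restricting attention to iterates bounded away from zero, and reconciling this regularization with the clean fixed-point/optimality characterization is the delicate part of the argument.
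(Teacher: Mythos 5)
Your proposal follows essentially the same route as the paper's own proof: the same reweighted-least-squares surrogate, the same use of Lemma~\ref{lemma1} blockwise to convert the weighted quadratic terms back into group norms, the same cancellation yielding monotone decrease of the objective, and the same appeal to convexity and lower-boundedness for global optimality. If anything, you are more careful than the paper at the two points it glosses over --- explicitly matching the fixed-point equation to the (sub)gradient optimality condition, and flagging the division-by-zero issue when a block norm vanishes (which the paper's argument silently assumes away).
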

\begin{proof}
According to Step 3 of Algorithm \ref{alg:1}, we know
\begin{eqnarray}\label{eq:proof1}
&&\mathbf{W}(t+1) = \mathop{\text{argmin}}\limits_{\mathbf{W}}{\|\mathbf{X}^\top\mathbf{W}+\mathbf{1}_n\mathbf{b}^\top-\mathbf{Y}\|_F^2}\\
&&+ \gamma_1\sum_{i=1}^{c}{\mathbf{w}^\top_i\mathbf{D}^i(t+1)\mathbf{w}_i}
  + \gamma_2\sum_{i=1}^{c}{\mathbf{\tilde{w}}_i^\top\mathbf{\tilde{D}}^i(t+1)\mathbf{\tilde{w}}_i}.\nonumber
\end{eqnarray}
Then, we can derive that
\begin{eqnarray}\label{eq:proof2}
&& \mathcal{J}(t+1) + \gamma_1\sum_{i=1}^{c}{\mathbf{w}^\top_i(t+1)\mathbf{D}^i(t+1)\mathbf{w}_i(t+1)}\nonumber\\
&& +\gamma_2\sum_{i=1}^{c}{\mathbf{\tilde{w}}_i^\top(t+1)\mathbf{\tilde{D}}^i(t+1)\mathbf{\tilde{w}}_i(t+1)} \nonumber\\
&\leq & \mathcal{J}(t) +\gamma_1\sum_{i=1}^{c}{\mathbf{w}^\top_i(t)\mathbf{D}^i(t+1)\mathbf{w}_i(t)}\nonumber\\
&& + \gamma_2\sum_{i=1}^{c}{\mathbf{\tilde{w}}_i^\top(t)\mathbf{\tilde{D}}^i(t+1)\mathbf{\tilde{w}}_i(t)},
\end{eqnarray}
where $\mathcal{J}(t)=\|\mathbf{X}^\top\mathbf{W}(t)+\mathbf{1}_n\mathbf{b}^\top-\mathbf{Y}\|_F^2$.

After substituting the definition of $\mathbf{D}^i$ and $\mathbf{\tilde{D}}^i$, we obtain
\begin{eqnarray}\label{eq:proof3}
&&\mathcal{J}(t+1)+\gamma_1\sum_{i=1}^c\sum_{j=1}^m\dfrac{\|\mathbf{w}_i^j(t+1)\|_2^2}{2\|\mathbf{w}_i^j(t)\|_2}\nonumber\\
&&+\gamma_2\sum_{i=1}^c\sum_{k=1}^s\dfrac{\|\mathbf{\tilde{w}}_i^k(t+1)\|_2^2}{2\|\mathbf{\tilde{w}}_i^k(t)\|_2}\nonumber\\
&\leq & \mathcal{J}(t)+\gamma_1\sum_{i=1}^c\sum_{j=1}^m\dfrac{\|\mathbf{w}_i^j(t)\|_2^2}{2\|\mathbf{w}_i^j(t)\|_2}\nonumber\\
&&+\gamma_2\sum_{i=1}^c\sum_{k=1}^s\dfrac{\|\mathbf{\tilde{w}}_i^k(t)\|_2^2}{2\|\mathbf{\tilde{w}}_i^k(t)\|_2}.
\end{eqnarray}
From Lemma \ref{lemma1}, we can derive
\begin{eqnarray}\label{eq:proof4}
&&\sum_{j=1}^m{\|\mathbf{w}_i^j(t+1)\|_2} - \sum_{j=1}^m{\dfrac{\|\mathbf{w}_i^j(t+1)\|_2^2}{2\|\mathbf{w}_i^j(t)\|_2}}\leq\nonumber\\
&&\sum_{j=1}^m{\|\mathbf{w}_i^j(t)\|_2} - \sum_{j=1}^m{\dfrac{\|\mathbf{w}_i^j(t)\|_2^2}{2\|\mathbf{w}_i^j(t)\|_2}},
\end{eqnarray}
and
\begin{eqnarray}\label{eq:proof5}
&&\sum_{k=1}^s{\|\mathbf{\tilde{w}}_i^k(t+1)\|_2} - \sum_{k=1}^s{\dfrac{\|\mathbf{\tilde{w}}_i^k(t+1)\|_2^2}{2\|\mathbf{\tilde{w}}_i^k(t)\|_2}}\leq\nonumber\\
&&\sum_{k=1}^s{\|\mathbf{\tilde{w}}_i^k(t)\|_2} - \sum_{k=1}^s{\dfrac{\|\mathbf{\tilde{w}}_i^k(t)\|_2^2}{2\|\mathbf{\tilde{w}}_i^k(t)\|_2}}.
\end{eqnarray}
Adding Eqs. (\ref{eq:proof3})-(\ref{eq:proof5}) on both sides, we obtain
\begin{eqnarray}\label{eq:proof6}
&&\mathcal{J}(t+1)+\gamma_1\sum_{i=1}^c\sum_{j=1}^m{\|\mathbf{w}_i^j(t+1)\|_2}\\
&&+\gamma_2\sum_{i=1}^c\sum_{k=1}^s{\|\mathbf{\tilde{w}}_i^k(t+1)\|_2}\nonumber\\
&\leq & \mathcal{J}(t)+\gamma_1\sum_{i=1}^c\sum_{j=1}^m{\|\mathbf{w}_i^j(t)\|_2}
+\gamma_2\sum_{i=1}^c\sum_{k=1}^s{\|\mathbf{\tilde{w}}_i^k(t)\|_2}.\nonumber
\end{eqnarray}
Therefore, Algorithm \ref{alg:1} decreases the objective value in each iteration.
Since the optimization problem defined in Eq. (\ref{eq:objective_final}) is convex,
and the objective is lower-bounded by zero due to the definition of matrix and vector
norms, thus the algorithm converges to the optimum.
\end{proof}

\begin{algorithm}[t]
    \SetKwInOut{Input}{Input}
    \SetKwInOut{Output}{Output}

    \Input{$\mathbf{X}=\left[\mathbf{x}_1,\cdots,\mathbf{x}_n\right]\in\Re^{d\times n}$
    			and  $\mathbf{Y}=\left[\mathbf{y}_1,\cdots,								\mathbf{y}_n\right]^\top\in\Re^{n\times c}$ }
    Let $t=1$. Initialize $\mathbf{W}(t)$ by solving $\mathop{\text{min}}\limits_{\mathbf{W}}{\|\mathbf{X}^\top\mathbf{W}+\mathbf{1}_n\mathbf{b}^\top-\mathbf{Y}\|_F^2}$.

    \While{not converge}{
    Calculate the block diagonal matrix $\mathbf{D}^i(t+1)(1\leq i\leq c)$,
    where the $j$-th diagonal block of $\mathbf{D}^i(t+1)$ is
    $\frac{1}{2\|\mathbf{w}_i^j(t)\|_2}\mathbf{I}_j$.\newline
    Calculate the block diagonal matrix $\mathbf{\tilde{D}}^i(t+1)(1\leq i\leq c)$,
    where the $k$-th diagonal block of $\mathbf{\tilde{D}}^i(t+1)$ is
    $\frac{1}{2\|\mathbf{\tilde{w}}_i^k(t)\|_2}\mathbf{I}_k$.

    For each $\mathbf{w}_i(1\leq i\leq c)$, $\mathbf{w}_i(t+1) = (\mathbf{X}\mathbf{X}^\top + \gamma_1\mathbf{D}^i(t+1) + \gamma_2\mathbf{\tilde{D}}^i(t+1))^{-1}\mathbf{X}(\mathbf{y}_i-\mathbf{b}_i)$.

    $t = t + 1$.
    }
    \Output{$\mathbf{W} = \mathbf{W}(t)\in\Re^{d\times c}$}
 \caption{An iterative algorithm to solve the problem in
 Eq. (\ref{eq:objective_final})}\label{alg:1}
\end{algorithm}


\section{Experiments} \label{sec:Experiments}
To quantitatively assess the performance of the proposed FABL method,
we conduct experiments using public benchmark datasets.
Furthermore, to evaluate the benefits of our FABL method in real-world robotics applications,
we deploy FABL on a Baxter robot to perform online, real-time behavior recognition for human-robot interaction.

\subsection{Implementation}
Our FABL approach is implemented using a combination of Matlab and C++ on a Linux machine with an i7 3.4GHz CPU and 16GB memory.
The Matlab code is used to validate our approach on two public datasets:
MSR Action3D Dataset \cite{li2010action} and
Cornell Activity Dataset \cite{Sung_ICRA12},
while the C++ program is employed for validation on a Baxter robot in a real-world {``serving drinks"} task.

We intentionally designed and applied four simple skeletal features to emphasize the performance gain resulted from our FABL method instead of sophisticated features.
These simple skeletal features include:
(1) {spatial} joint displacement
that is the 3D coordinate difference of each body part with respect to the torso:
$(\Delta x, \Delta y, \Delta z) = (x, y, z) - (x^c, y^c, z^c)$,
where $(x,y,z)$ represents the coordinates of each skeletal joint,
and $(x^c,y^c,z^c)$ denotes the coordinates of the center torso joint in skeletal data,
(2) {temporal} joint displacement, which is defined as the temporal location difference of the same body joint in the current frame with respect to the previous frame:
$(\dot{x}, \dot{y}, \dot{z}) = (x_{t}, y_t, z_t) - (x_{t-1}, y_{t-1}, z_{t-1})$,
where $(x_{t}, y_t, z_t)$ is the joint location at time $t$,
(3) long-term temporal joint displacement,
defined as the temporal 3D location difference between the current frame and the initial frame:
$(\dot{\tilde{x}}, \dot{\tilde{y}}, \dot{\tilde{z}}) = (x_{t}, y_t, z_t) - (x_{0}, y_{0}, z_{0})$,
where $(x_{0}, y_{0}, z_{0})$ is the coordinates of a joint in the initial frame,
and (4) spatial joint distance,
which is defined as the geometrical distance of a joint to the torso center joint: $d = \|(x, y, z) - (x^c, y^c, z^c)\|_2$.
Then, we compute a histogram of each feature type to build a vector that is used as a feature modality in our experiment.


\subsection{Results on MSR Action3D Dataset}

We evaluate the performance of the proposed approach to recognize human behaviors when interacting with structured-light cameras,
using the MSR Action3D benchmark dataset
\cite{li2010action}.
This dataset contains 20 categories of human actions performed by 7 subjects for three times. The skeleton sequence of ``high arm waving" is shown in Fig. \ref{fig:MSR_dataset}.

\begin{figure}[htb]
\centering
\includegraphics[width=3.4in]{./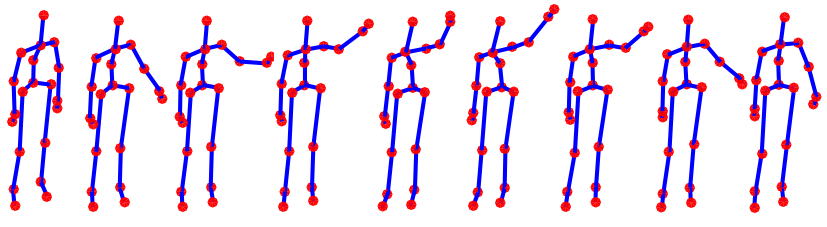}
\caption{The MSR Action3D dataset is utilized in the experiment to evaluate the proposed FABL approach, which contains 20 activities recorded using Kinect, which are (M1) high arm wave, (M2) horizontal arm wave, (M3) hammer, (M4) hand catch, (M5) forward punch, (M6) high throw, (M7) draw x, (M8) draw tick, (M9) draw circle, (M10) hand clap, (M11) two hand wave, (M12) side boxing, (M13) bend, (M14) forward kick, (M15) side kick, (M16) jogging, (M17) tennis swing, (M18) tennis serve, (M19) golf swing, and (M20) pick up \& throw. This figure shows a sample skeleton sequence of the action (M1) high arm waving in the dataset}\label{fig:MSR_dataset}
\end{figure}

We evaluate the recognition performance using a challenging subject-wise setting.
That is, the training dataset does not contain any data instances from the subjects who participate in testing.
When combined both structured sparsity-inducting norms to perform simultaneous feature and skeletal joint learning,
our FABL method obtains an accuracy of 91.67\%,
The confusion matrix obtained by our method is shown in Fig. \ref{fig:MSR_confusionMatrix},
which demonstrates our FABL approach is able to well recognize most of the behaviors.
The actions that are not well identified is (M4) hand-catch, and (M7) draw-x that is always misclassified as the action of (M8) draw tick or (M9) draw circle, which have similar, small motions.

\begin{figure}[htb]
  \subfigure[MSR Action3D dataset]{
    \label{fig:MSR_confusionMatrix} 
    \begin{minipage}[b]{0.23\textwidth}
      \centering
        \includegraphics[height=1.55in]{./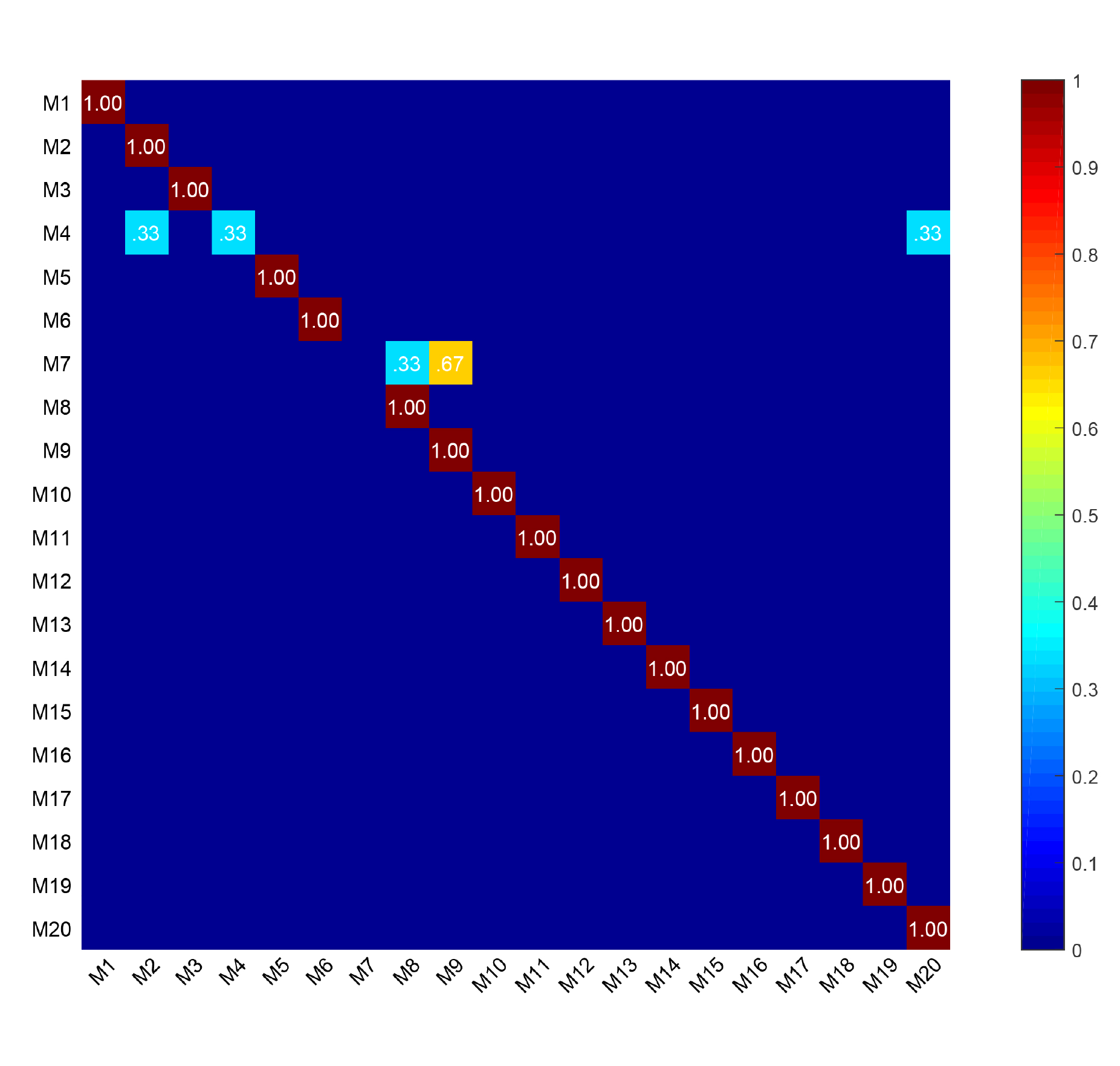}
    \end{minipage}}
  \subfigure[CAD-60 dataset]{
     \label{fig:CAD60_confusionMatrix}
    \begin{minipage}[b]{0.22\textwidth}
      \centering
        \includegraphics[height=1.55in]{./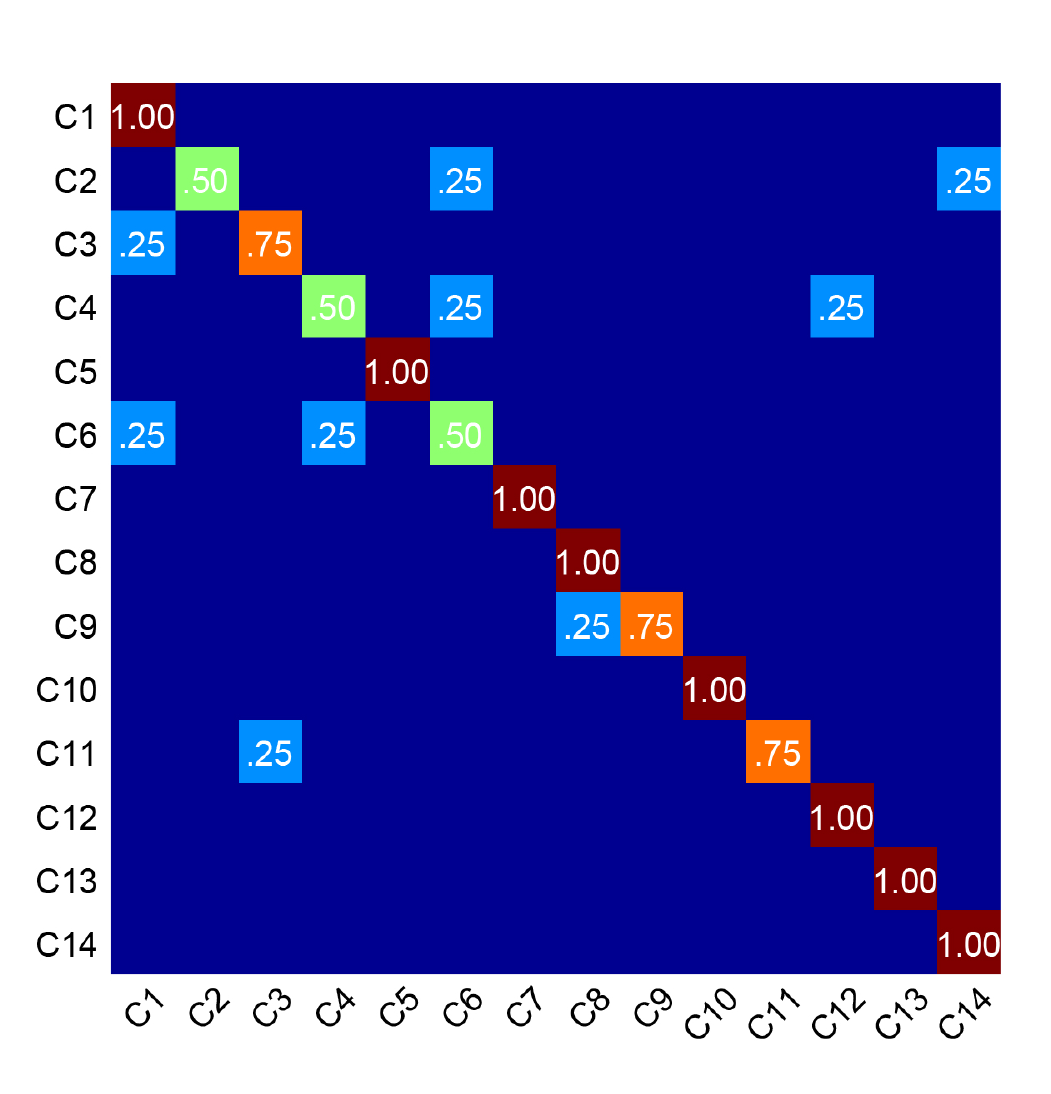}
    \end{minipage}}
  \caption{
  Confusion matrices obtained by our FABL method over the MSR Action3D and CAD-60 dataset datasets. The behavior category labels \emph{M1-M16} and \emph{C1-C14} are described in Fig. \ref{fig:MSR_dataset} and Fig. \ref{fig:CAD_dataset}, respectively.
  }
  \label{datasets_confusionMatrix} 
\end{figure}

We compare with two baseline methods including feature-learning-only ($\gamma_2=0$) and body-part-learning-only ($\gamma_1=0$).
As presented in Table \ref{tab:MSR_comparison},
the feature-learning-only method obtains an average recognition accuracy of 85.00\%,
while the body-part-learning-only obtains an average accuracy of 86.67\%.
This indicates that FABL outperforms baseline approaches using a single norm for regularization.
In addition, we compare our FABL method with previous activity recognition techniques based on skeleton features.
FABL achieves promising recognition accuracy
(with the high-speed performance) on the MSR Action3D dataset.

\begin{table}[htbp]
\centering
\caption{Comparison of average accuracy with previous skeleton-based methods on
the MSR Action3D dataset}
\label{tab:MSR_comparison}
\tabcolsep=0.11cm
\begin{tabular}{|c|c|c|}
\hline
Reference & Method & Accuracy\\
\hline\hline
Ofli et al. \cite{ofli2014sequence} & Sequence
of Most Informative Joints & 41.18\% \\
\hline
Wang \emph{et al.} \cite{wang2012mining} & Dynamic Temporal Warping & 54.0\% \\
\hline
Ellis et al. \cite{ellis2013exploring} & Joints Distance + Key Poses & 65.7\% \\
\hline
Li \emph{et al.} \cite{li2010action} & Action Graph & 74.7\% \\
\hline
Xia et al. \cite{xia2012view} & HOJ3D & 78\% \\
\hline
Yang and Tian \cite{yang2014effective} & EigenJoints & 83.3\% \\
\hline
Wang \emph{et al.} \cite{wang2014learning} & Actionlet Ensemble & 88.2\% \\
\hline
Ben Amor \emph{et al.} \cite{amor2016action} & Skeleton Trajectories & 89\% \\
\hline\hline
 & Feature Learning Only & 85.00\%\\
Our Methods & Body-Part Learning Only & 86.67\%\\
 & \textbf{FABL} & \textbf{91.67\%}\\
\hline
\end{tabular}
\end{table}

\subsection{Results on Cornell Activity Dataset}
The Cornell Activity Dataset 60 (CAD-60)
 \cite{Sung_ICRA12}
is a widely applied benchmark for human activity recognition in robotics applications.
This dataset includes color-depth and skeleton information of twelve daily activities
as well as two motions {``still"} and {``random" } recorded by a Kinect sensor in various environments,
including office, kitchen, bedroom, bathroom, and living room. Each activity is performed by four subjects with two males and two females (one subject is left-handed). The skeleton data in each frame contains 15 joints, as shown in Figure \ref{fig:CAD_dataset}.
We evaluate FABL's performance in a subject-wise cross-validation setup \cite{ni2012order}, where actions performed by new subjects are used for testing.

\begin{figure}[htb]
\centering
\includegraphics[width=3.35in]{./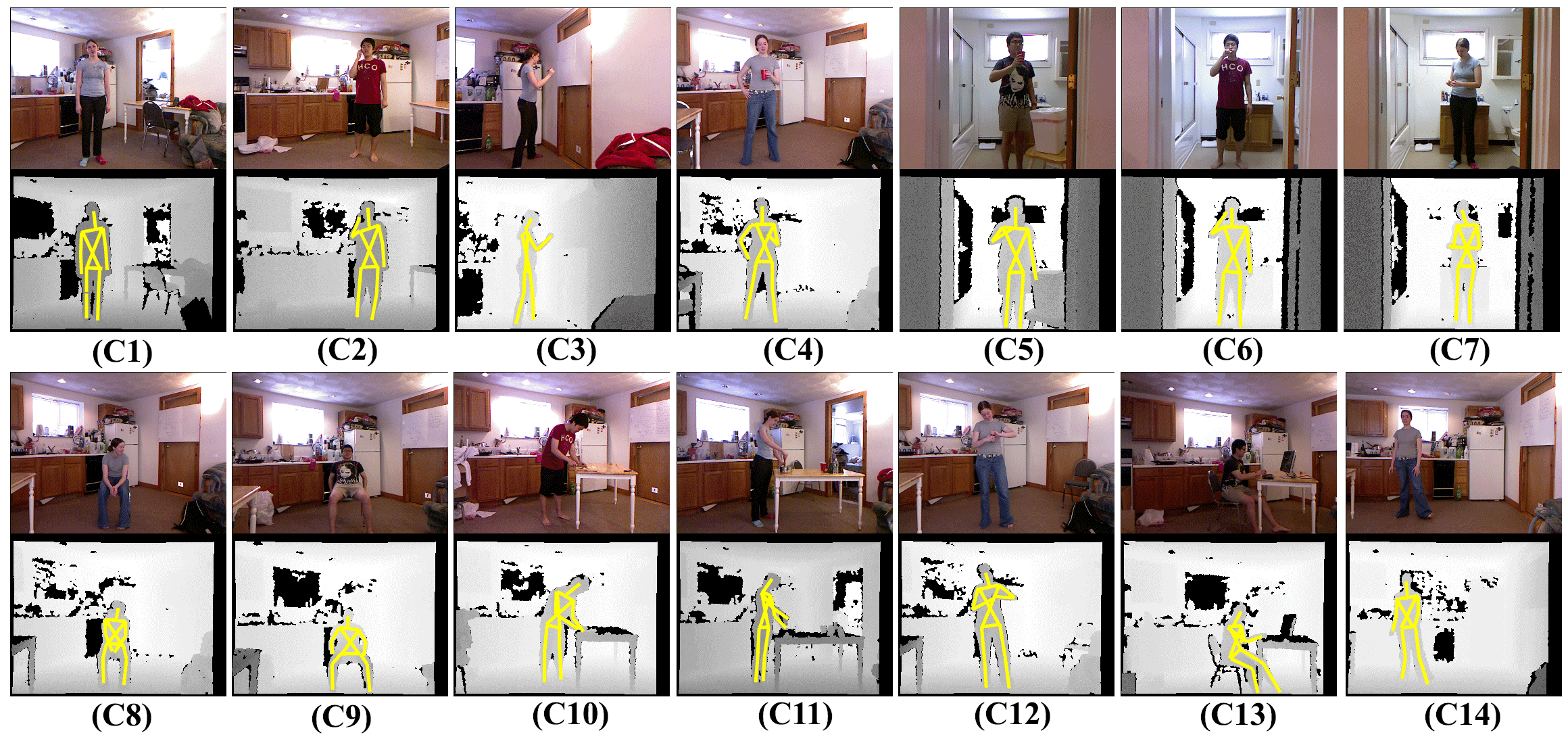}
\caption{The CAD-60 dataset contains 14 behaviors, including (C1) standing still, (C2) talking on the phone, (C3) writing on whiteboard, (C4) drinking water, (C5) rinsing mouth with water, (C6) brushing teeth, (C7) wearing contact lenses, (C8) talking on couch, (C9) relaxing on couch, (C10) cooking (chopping), (C11) cooking (stirring), (C12) opening pill container, (C13) working on computer, (C14) random.
RGB images are depicted in the top row, and the depth images with the human skeleton in yellow are shown in the bottom row.
}\label{fig:CAD_dataset}
\end{figure}

As demonstrated in Table \ref{tab:CAD60_comparison},
the FABL method using both regularization terms obtain an average accuracy of 83.93\%,
and its detailed confusion matrix is graphically presented in Fig. \ref{fig:CAD60_confusionMatrix},
which generally indicates that most of the activities can be well classified by our approach.

\newcommand{\tabincell}[2]{\begin{tabular}{@{}#1@{}}#2\end{tabular}}
\begin{table}[htb]
\centering
\caption{Comparison of average recognition accuracy with previous skeleton-based methods on the CAD-60 dataset}
\label{tab:CAD60_comparison}
\tabcolsep=0.15cm
\begin{tabular}{|c|c|c|}
\hline
Reference & Method & Accuracy\\
\hline\hline
Ni \emph{et al.} \cite{ni2012order} & Order-Preserving Sparse Coding  & 65.32\% \\
\hline
\tabincell{c}{Piyathilaka and\\Kodagoda \cite{piyathilaka2013gaussian}} & Hidden Markov Model & 78.38\% \\
\hline
Wang \emph{et al.} \cite{wang2014learning} & Skeleton-based Actionlet Ensemble  & 74.70\% \\
\hline
Zhang and Tian \cite{zhang2012rgb} & Bag of Features & 80.77\% \\

\hline\hline
 & Feature Learning Only & 78.57\%\\
Our Methods & Body-Part Learning Only & 79.46\%\\
 &  \textbf{FABL} & \textbf{83.93\%}\\
\hline
\end{tabular}
\end{table}


We implemented two baseline techniques under the same formulation. First, we
set $\gamma_2=0$ to evaluate the performance of the feature learning scheme,
and obtain an accuracy of 78.57\%.
Then, $\gamma_1$ is set to zero to evaluate the performance of the body-part learning scheme,
and we obtain an average accuracy of 79.46\%.
It is observed that both baseline methods perform worse than the full FABL approach using both regularization terms.
Moreover, we implemented a third baseline method with no regularization terms,
which obtains an accuracy of 76.79\% and performs worse than the methods with the regularization terms.
In addition, we compare our FABL method with previous state-of-the-art skeleton-based techniques for activity recognition,
as reported in Table \ref{tab:CAD60_comparison},
which shows our FABL method outperforms these skeleton-based techniques over the CAD-60 dataset.

\subsection{Behavior Recognition for Human-Robot Interaction}
Besides using public benchmark datasets to evaluate and compare our FABL method's accuracy,
we also implemented and deployed the method on a physical robot to validate its performance in real-world robotics applications.
The robot employed in this experiment is a Baxter robot, as shown in Fig. \ref{fig:baxter},
which uses a structured-light sensor for onboard 3D perception
and the same workstation (Intel i7 3.4GHz CPU and 16GB memory) for onboard control and data processing.

\begin{figure}[htb]
  \hspace{-0.05in}
  \subfigure[Baxter performing ``serving drinks'']{
     \label{fig:baxter}
    \begin{minipage}[b]{0.265\textwidth}
      \centering
        \includegraphics[height=1.35in]{./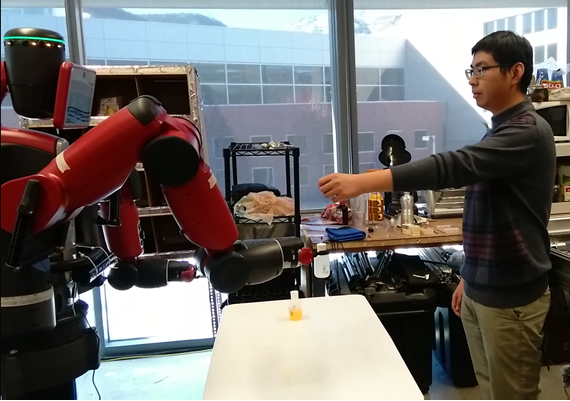}
    \end{minipage}}
  \subfigure[Confusion matrix]{
    \label{fig:baxter_confusionMatrix} 
    \begin{minipage}[b]{0.175\textwidth}
      \centering
        \includegraphics[height=1.4in]{./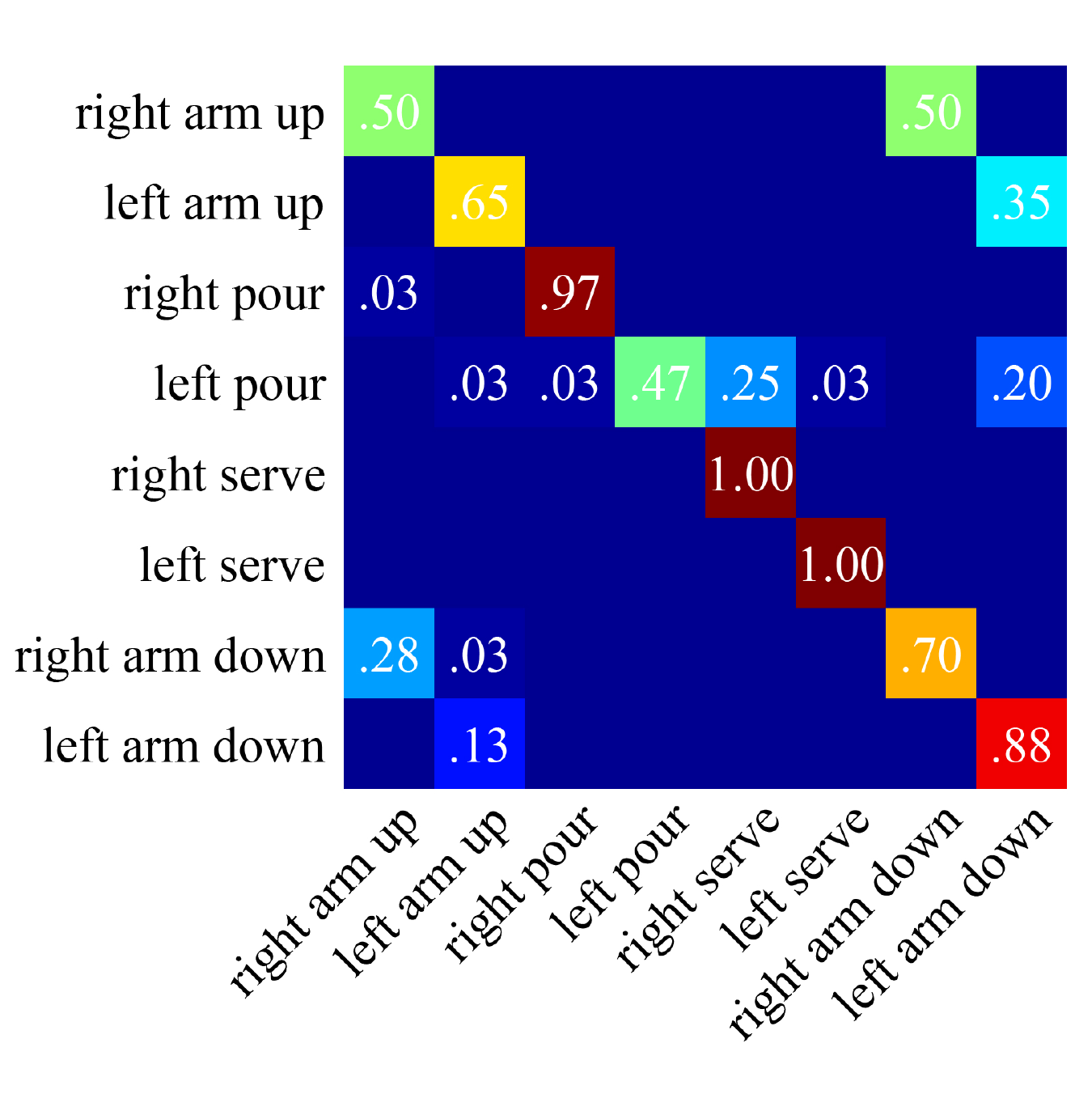}
    \end{minipage}}
  \caption{We evaluate our FABL approach using a Baxter robot to recognize behaviors for real-time human-robot interaction. The tasks focus on the robot-assisted living application such as  ``serving drinks'' as shown in Fig. \ref{fig:baxter}.
  The confusion matrix is illustrated in Fig. \ref{fig:baxter_confusionMatrix}.
  }
  \label{fig:baxter_experiment} 
\end{figure}

In this experiment, the task focuses on the robot-assisted living application,
where the Baxter robot needs to recognize the activities of a subject
and perform a collection of predefined robot actions, such as ``serving drinks,'' as demonstrated in Fig. \ref{fig:baxter},
in response to the subject's activity.
We define six robot actions, including
fetching a drinking bottle with one hand,
fetching an empty cup with the other hand,
pouring the drinks into the cup,
putting back the bottle,
serving the drinking cup to the subject,
and finally putting back the cup.
Each robot action is triggered
by a specific command gesture performed
by a subject in front of the robot,
which must be recognized by the Baxter robot.
The skeleton data is captured onboard and in real time using ROS and the OpenNI package.

Eight human behavior categories are defined and used to interact with the robot,
including lifting up left/right arms, pouring with left/right hands,
serving with left/right hands, and putting down left/right arms.
We specifically distinguish between left side and right side,
because this is critical to take into account human preference in practical, real-world scenarios.
Two human subjects having different body scales and motion patterns are involved in this experiment.
Each subject performs each of the eight behaviors 20 times. Actions by one subject were used for training, while other subject's actions were used for testing.
Ground truth is manually recorded and used to compare with recognition results obtained by the robot for quantitative evaluation.
After extracting multimodal features from training instances,
our method computes the optimal weight matrix by Algorithm \ref{alg:1} using the training data.
Then, the learned FABL approach is deployed on the robot for online, onboard behavior recognition to enable real-time human-robot interaction.

Similar to the experiments using public datasets,
we also quantitatively assess FABL's performance and compare with baseline and existing skeleton-based techniques.
The average accuracy obtained by the complete FABL method is 77.19\%
with both regularization terms.
The confusion matrix obtained by our FABL approach is demonstrated in Fig. \ref{fig:baxter_confusionMatrix}.
For comparison, the baseline technique based only on feature learning ($\gamma_2=0$)
obtains an accuracy of 76.56\%,
while the baseline based only on body part learning ($\gamma_1=0$)
obtains an average recognition accuracy of 76.25\%.
In addition, we compare our FABL method with several previous skeleton-based recognition techniques and present the results in Table \ref{tab:Compare_baxter}.
We can observe that the FABL method is able to obtain better performance over baseline and used previous methods.
Since only one subject's actions were used to train the FABL model, the recognition accuracy was not as significant as that using public benchmark datasets. More training data will improve the testing performance.

\begin{table}[tbp]
\centering\tabcolsep=0.1cm
\caption{Comparison of average recognition accuracy with previous methods for real-time human-robot interaction}\label{tab:Compare_baxter}
\begin{tabular}{|c|c|c|}
\hline
Reference & Method & Accuracy\\
\hline\hline
\cite{ellis2013exploring} & Relative Angles and Distances & 15.00\% \\\hline
\cite{wang2014learning} & Histogram of Joint Position Differences & 48.13\% \\\hline
\cite{gowayyed2013histogram} & Histogram of Oriented Displacements & 51.25\% \\
\hline\hline
& Feature Learning Only & 76.56\%\\
Our Methods & Body-Part Learning Only & 76.25\%\\
 & \textbf{FABL}   &  \textbf{77.19\%}\\\hline
\end{tabular}
\end{table}

\section{Discussion}\label{sec:discussion}

\noindent\textbf{High-Speed Processing.}
Due to the capability of our FABL approach to integrate both feature learning and classification in the same formulation,
and the efficiency of our regression-like objective function,
our FABL approach is able to achieve high-speed processing.
To validate this strong advantage,
we perform additional experiments over the MSR Action3D and CAD-60 datasets using Matlab implementations without any optimization,
and utilizing the real Baxter robot using a C++ implementation.
The runtime results on all used datasets are presented in Table \ref{tab:Compare_time},
which shows our FABL approach can achieve a significantly high processing speed at the order of $10^4$ Hz.
This indicates the promise of our FABL approach to identify human behaviors in real-time robotics applications.


\begin{table}[htbp]
\centering\tabcolsep=0.1cm
\caption{Runtime Analysis Over Different Datasets}\label{tab:Compare_time}
\begin{tabular}{|l|c|c|c|}
\hline
Runtime & MSR Action3D & CAD-60 & Baxter\\
\hline\hline
Processing speed (Hz) & $2.2\times 10^{4}$ & $1.4\times 10^{4}$ & $3.3\times 10^{4}$ \\
Time per observation (s) & $4.5\times 10^{-5}$ & $7.3\times 10^{-5}$ & $3.0\times 10^{-5}$ \\
\hline
\end{tabular}
\end{table}

\noindent\textbf{Generalizability.}
FABL is a general approach that can work with different body kinematic models obtained by a variety of sensing devices and skeleton generation packages,
including the OpenNI package in ROS,
Microsoft SDKs, and MoCap systems.
Given any kinematic body model from the devices,
we can downsample the body model into 15 body parts,
and apply FABL to automatically identify the most representative parts. 
In this case, FABL can achieve cross-training \cite{Zhang15},
i.e., methods trained on a kinematic body model from one device can be directly applied to other models by a different device,
which can significantly save design labor.



\noindent\textbf{Hyperparameter Selection.}
The regularization hyperparameters $\gamma_1$ and $\gamma_2$ are utilized to control the effect of feature learning and the strength of body-part learning, respectively.
Their optimal values can be decided using cross-validation during the training process.
In general,
we observe that the values $\gamma_1 = 0.1$ and $\gamma_2=0.1$ usually result in satisfactory recognition accuracy,
which shows that both regularization terms are necessary.
When the values of hyperparameters become too large,
the performance decreases,
because the loss function that models the recognition error is more ignored.
When $\gamma_1$ and $\gamma_2$ take too small values,
the approach cannot well capture the interrelationships of feature modalities and body parts,
thus decreasing the recognition accuracy.

%

\section{Conclusion}\label{sec:Conclusion}
In this paper,
we introduce a novel FABL approach
that is able to simultaneously learn discriminative feature modalities and body parts to perform high-speed human behavior recognition.
The proposed FABL method automatically identifies discriminative feature modalities and important body parts
using two structured sparsity-inducing norms to model their interrelationships.
Our FABL approach formulates behavior recognition as a regression-like optimization problem,
which is solved by an efficient iteration algorithm that possesses a theoretical guarantee to find the optimal solution.
To evaluate the performance of the proposed FABL method,
we perform empirical studies using two public benchmark datasets
and a physical Baxter robot.
The experimental results have indicated that FABL is able to outperform  existing skeleton-based methods.
More importantly, our FABL approach achieves a high processing speed of more than $10^4$ Hz,
which can enable realistic, self-contained, intelligent robots to recognize human behaviors and interact with humans in real time.

\bibliographystyle{IEEEtran}
\bibliography{references_abbr}

\begin{thebibliography}{10}
\providecommand{\url}[1]{#1}
\csname url@rmstyle\endcsname
\providecommand{\newblock}{\relax}
\providecommand{\bibinfo}[2]{#2}
\providecommand\BIBentrySTDinterwordspacing{\spaceskip=0pt\relax}
\providecommand\BIBentryALTinterwordstretchfactor{4}
\providecommand\BIBentryALTinterwordspacing{\spaceskip=\fontdimen2\font plus
\BIBentryALTinterwordstretchfactor\fontdimen3\font minus
  \fontdimen4\font\relax}
\providecommand\BIBforeignlanguage[2]{{%
\expandafter\ifx\csname l@#1\endcsname\relax
\typeout{** WARNING: IEEEtran.bst: No hyphenation pattern has been}%
\typeout{** loaded for the language `#1'. Using the pattern for}%
\typeout{** the default language instead.}%
\else
\language=\csname l@#1\endcsname
\fi
#2}}

\bibitem{zhang2014simplex}
H.~Zhang, W.~Zhou, C.~Reardon, and L.~E. Parker, ``Simplex-based {3D}
  spatio-temporal feature description for action recognition,'' in \emph{CVPR},
  2014.

\bibitem{wang2014learning}
J.~Wang, Z.~Liu, Y.~Wu, and J.~Yuan, ``Learning actionlet ensemble for {3D}
  human action recognition,'' \emph{TPAMI}, vol.~36, no.~5, pp. 914--927, 2014.

\bibitem{shotton2011real}
J.~Shotton, A.~Fitzgibbon, M.~Cook, T.~Sharp, M.~Finocchio, R.~Moore,
  A.~Kipman, and A.~Blake, ``Real-time human pose recognition in parts from
  single depth images,'' in \emph{CVPR}, 2011.

\bibitem{han2016space}
F.~Han, B.~Reily, W.~Hoff, and H.~Zhang, ``Space-time representation of people
  based on {3D} skeletal data: A review,'' \emph{CVIU}, 2017, to appear.

\bibitem{Zhang15}
H.~Zhang and L.~E. Parker, ``Bio-inspired predictive orientation decomposition
  of skeleton trajectories for real-time human activity prediction,'' in
  \emph{ICRA}, 2015.

\bibitem{Sung_ICRA12}
J.~Sung, C.~Ponce, B.~Selman, and A.~Saxena, ``Unstructured human activity
  detection from {RGBD} images,'' in \emph{ICRA}, 2012.

\bibitem{Yang_CVPRW12}
X.~Yang and Y.~Tian, ``{EigenJoints}-based action recognition using
  {N{\"a}ive-Bayes-Nearest-Neighbor},'' in \emph{CVPRW}, 2012.

\bibitem{gowayyed2013histogram}
M.~A. Gowayyed, M.~Torki, M.~E. Hussein, and M.~El-Saban, ``Histogram of
  oriented displacements {(HOD)}: describing trajectories of human joints for
  action recognition,'' in \emph{IJCAI}, 2013.

\bibitem{yang2014effective}
X.~Yang and Y.~Tian, ``Effective {3D} action recognition using {EigenJoints},''
  \emph{JVCIR}, vol.~25, no.~1, pp. 2--11, 2014.

\bibitem{wang2012mining}
J.~Wang, Z.~Liu, Y.~Wu, and J.~Yuan, ``Mining actionlet ensemble for action
  recognition with depth cameras,'' in \emph{CVPR}, 2012.

\bibitem{ofli2014sequence}
F.~Ofli, R.~Chaudhry, G.~Kurillo, R.~Vidal, and R.~Bajcsy, ``Sequence of the
  most informative joints {(SMIJ)}: A new representation for human skeletal
  action recognition,'' \emph{JVCIR}, vol.~25, no.~1, pp. 24--38, 2014.

\bibitem{chen2013online}
X.~Chen and M.~Koskela, ``Online {RGB-D} gesture recognition with extreme
  learning machines,'' in \emph{ICMI}, 2013.

\bibitem{Wei2013}
P.~Wei, Y.~Zhao, N.~Zheng, and S.-C. Zhu, ``Modeling {4D} human-object
  interactions for event and object recognition,'' in \emph{ICCV}, 2013.

\bibitem{wei2013concurrent}
P.~Wei, N.~Zheng, Y.~Zhao, and S.-C. Zhu, ``Concurrent action detection with
  structural prediction,'' in \emph{AAAI}, 2013.

\bibitem{yu2015discriminative}
G.~Yu, Z.~Liu, and J.~Yuan, ``Discriminative orderlet mining for real-time
  recognition of human-object interaction,'' in \emph{ACCV}, 2014.

\bibitem{Masood2011}
S.~Z. Masood, C.~Ellis, A.~Nagaraja, M.~F. Tappen, J.~J.~L. Jr., and
  R.~Sukthankar, ``Measuring and reducing observational latency when
  recognizing actions,'' in \emph{ICCV}, 2011.

\bibitem{wang2013approach}
C.~Wang, Y.~Wang, and A.~L. Yuille, ``An approach to pose-based action
  recognition,'' in \emph{CVPR}, 2013.

\bibitem{xiaohan2015joint}
B.~X. Nie, C.~Xiong, and S.-C. Zhu, ``Joint action recognition and pose
  estimation from video,'' in \emph{CVPR}, 2015.

\bibitem{du2015hierarchical}
Y.~Du, W.~Wang, and L.~Wang, ``Hierarchical recurrent neural network for
  skeleton based action recognition,'' in \emph{CVPR}, 2015.

\bibitem{chaaraoui2014evolutionary}
A.~A. Chaaraoui, J.~R. Padilla-L{\'o}pez, P.~Climent-P{\'e}rez, and
  F.~Fl{\'o}rez-Revuelta, ``Evolutionary joint selection to improve human
  action recognition with {RGB-D} devices,'' \emph{ESA}, vol.~41, no.~3, pp.
  786--794, 2014.

\bibitem{reyes2011featureweighting}
M.~Reyes, G.~Dom{\'\i}nguez, and S.~Escalera, ``Feature weighting in dynamic
  timewarping for gesture recognition in depth data,'' in \emph{ICCVW}, 2011.

\bibitem{patsadu2012human}
O.~Patsadu, C.~Nukoolkit, and B.~Watanapa, ``Human gesture recognition using
  {Kinect} camera,'' in \emph{IJCCSSE}, 2012.

\bibitem{huang2014action}
D.-A. Huang and K.~M. Kitani, ``Action-reaction: Forecasting the dynamics of
  human interaction,'' in \emph{ECCV}, 2014.

\bibitem{Chaudhry2013}
R.~Chaudhry, F.~Ofli, G.~Kurillo, R.~Bajcsy, and R.~Vidal, ``Bio-inspired
  dynamic {3D} discriminative skeletal features for human action recognition,''
  in \emph{CVPR}, 2013.

\bibitem{nie2010efficient}
F.~Nie, H.~Huang, X.~Cai, and C.~H. Ding, ``Efficient and robust feature
  selection via joint $\ell_{2,1}$-norms minimization,'' in \emph{NIPS}, 2010.

\bibitem{li2010action}
W.~Li, Z.~Zhang, and Z.~Liu, ``Action recognition based on a bag of {3D}
  points,'' in \emph{CVPRW}, 2010.

\bibitem{ellis2013exploring}
C.~Ellis, S.~Z. Masood, M.~F. Tappen, J.~J. Laviola~Jr, and R.~Sukthankar,
  ``Exploring the trade-off between accuracy and observational latency in
  action recognition,'' \emph{IJCV}, vol. 101, no.~3, pp. 420--436, 2013.

\bibitem{xia2012view}
L.~Xia, C.-C. Chen, and J.~Aggarwal, ``View invariant human action recognition
  using histograms of {3D} joints,'' in \emph{CVPRW}, 2012.

\bibitem{amor2016action}
B.~Ben~Amor, J.~Su, and A.~Srivastava, ``Action recognition using
  rate-invariant analysis of skeletal shape trajectories,'' \emph{TPAMI},
  vol.~38, no.~1, pp. 1--13, 2016.

\bibitem{ni2012order}
B.~Ni, P.~Moulin, and S.~Yan, ``Order-preserving sparse coding for sequence
  classification,'' in \emph{ECCV}, 2012.

\bibitem{piyathilaka2013gaussian}
L.~Piyathilaka and S.~Kodagoda, ``Gaussian mixture based {HMM} for human daily
  activity recognition using {3D} skeleton features,'' in \emph{CIEA}, 2013.

\bibitem{zhang2012rgb}
C.~Zhang and Y.~Tian, ``{RGB-D} camera-based daily living activity
  recognition,'' \emph{CVIP}, vol.~2, no.~4, p.~12, 2012.

\end{thebibliography}

\end{document}